\newtheorem{theorem}{Theorem}
\newtheorem{corollary}{Corollary}
\title{Advantage Actor-Critic with Reasoner: Explaining the Agent's Behavior from an Exploratory Perspective}
\author {
    Muzhe Guo\textsuperscript{\rm 1},
    Feixu Yu\textsuperscript{\rm 2},
    Tian Lan\textsuperscript{\rm 2}, 
    Fang Jin\textsuperscript{\rm 1}
}
\begin{document}
\maketitle
\begin{abstract}
Reinforcement learning (RL) is a powerful tool for solving complex decision-making problems, but its lack of transparency and interpretability has been a major challenge in domains where decisions have significant real-world consequences. In this paper, we propose a novel Advantage Actor-Critic with Reasoner (A2CR), which can be easily applied to Actor-Critic-based RL models and make them interpretable. A2CR consists of three interconnected networks: the Policy Network, the Value Network, and the Reasoner Network. By predefining and classifying the underlying purpose of the actor's actions, A2CR automatically generates a more comprehensive and interpretable paradigm for understanding the agent's decision-making process. It offers a range of functionalities such as purpose-based saliency, early failure detection, and model supervision, thereby promoting responsible and trustworthy RL. Evaluations conducted in action-rich Super Mario Bros environments yield intriguing findings: Reasoner-predicted label proportions decrease for ``Breakout" and increase for ``Hovering" as the exploration level of the RL algorithm intensifies. Additionally, purpose-based saliencies are more focused and comprehensible.
\end{abstract}

\section{Introduction}
Reinforcement learning (RL) has demonstrated great potential in solving complex decision-making problems, enabling intelligent agents to acquire decision-making abilities through interactions with their environment. Despite its successes in diverse applications, such as gaming~\cite{mnih2013playing, silver2016mastering}, robotics~\cite{zhao2020sim}, large-scale models training~\cite{Gpt4}, and recommendation systems~\cite{afsar2022reinforcement}, RL models face criticism for their lack of transparency and interpretability. This is a challenging problem due to the dynamics of the underlying Markov Decision Process, as well as the interaction with complex environments and agent settings that shapes an agent's behavior.
%The wide range of possible decision-making processes makes it challenging to comprehend agent's behavior is shaped by the interactions and signals during training. 
Furthermore, RL models are often trained through a trial-and-error process, leading to non-intuitive and unpredictable behavior, particularly in novel scenarios. This unpredictability presents a challenge for interpreting the model's decisions, which can be especially problematic in safety-critical domains where transparency is essential. 

The field of RL interpretability is relatively new, lacking standardized approaches. As a result, researchers must explore a variety of techniques to gain valuable insights into RL decision-making processes~\cite{dazeley2023explainable}. However, the existing explainable RL approaches frequently come up short in furnishing intuitive comprehension of agents' actions. Certain explanations revolve around methodologies like tracing nodes~\cite{krarup2019model}, policy summarization~\cite{lage2019exploring}, or goal-driven interpretation~\cite{sado2023explainable}. While these approaches allow the system to articulate why a selected action aligns with achieving its goal, the resulting explanations frequently lack context-specific meaning or depth.

In this work, we propose a novel and general approach to interpreting the agent’s actions from an exploratory perspective. Our framework, named Advantage Actor-Critic with Reasoner (A2CR), consists of three interconnected networks: Policy Network ($\pi_\theta$), Value Network ($v_\omega$), and Reasoner Network ($R_\phi$). Overall, $\pi_\theta$ learns policies for selecting actions, $v_\omega$ observes the actions and evaluates states, and $R_\phi$ provides an interpretation of the agent by predicting the purpose of the actor's action, taking into account the state differences, state values, and rewards after each action. $R_\phi$ aims to collect training data, generate training labels, and train the network automatically and simultaneously by engaging with policies and actors without interfering with them. 

We provide the theoretical feasibility of computing the state shift between two states using Phase Correlation~\cite{foroosh2002extension}, theoretically supporting the autonomous generation of training labels of $R_\phi$. Additionally, we prove the statistical convergence of the proportion of training labels within $R_\phi$'s data collection process. This convergence forms the foundation for an effective collector and smooth training of $R_\phi$. 
Through experiments with Super Mario Bros games, our main findings indicate that saliencies with specific action purposes offer heightened comprehensibility. Besides, the high instability in action purposes reveals instances where agents are headed for failure. Moreover, the proportion of predicted labels unveils the degree of inherent exploration in RL algorithms.

In summary, this paper has several contributions and innovations:
(1) Our approach offers a more transparent model for the agent's decision-making process, making it applicable to a wide range of Actor-Critic algorithms without compromising the original Actor-Critic's performance.
(2) A2CR can be further combined with other interpretation techniques. To the best of our knowledge, we are the first to provide action purpose-based saliency maps for RL agents.
(3) Without manual annotation, $R_\phi$ automates its training data collection by dynamically adapting to new scenarios, and we have demonstrated the statistical convergence of its training label proportions during training.

\section{Related work}
Despite significant advancements in the field of Explainable Artificial Intelligence (XAI), achieving interpretability in deep reinforcement learning remains a challenging task, with the absence of a unified and universally applicable approach. Various approaches have been explored in this regard.
Saliency-based methods~\cite{greydanus2018visualizing, mott2019towards} aim to visualize the decision-making process of the agent via saliency maps but may highlight unwanted areas due to the agent's unobvious intentions.
Attention-based methods~\cite{wu2021self} can extract keypoints from salient regions and are not related to particular object semantics. 
The Causal Lens method~\cite{madumal2020explainable} encodes causal relationships to learn a structural causal model, facilitating interpretability, but might not be easily understandable for non-technical users.
Decision tree-based methods~\cite{vasic2019moet, liu2021learning} identify influential features driving the agent's behavior and construct decision trees. However, this approach may result in numerous nodes tracing back to the decision rule path.
Interaction data-based methods~\cite{ehsan2018rationalization, ehsan2019automated} involve collecting explanations from human players during the decision-making process to gain insights into how the agent's behavior is perceived, but annotations are costly and specific to only a single scenario.
In this work, our method serves as a plug-in module for existing Actor-Critic algorithms, enhancing their generalizability and comprehensibility, with specific experiments conducted using the Advantage Actor-Critic (A2C) approach~\cite{mnih2016asynchronous}.

\section{Methods}
Basic RL is commonly modeled as a Markov decision process (MDP), which is defined as a four-element tuple $(\mathcal{S},\mathcal{A},\mathbb{P},r)$, representing the state set, action set, state transition function, and reward function, respectively. In such an environment setting, we begin by providing an overview of both A2C and our A2CR frameworks, then focus on the A2CR, covering its interpretation goal, structure, training process, training label generation, and label properties.

\subsection{Advantage Actor-Critic Review}
The Actor-Critic approach consists of two networks: the Policy Network $\pi_\theta(a|s)$, parameterized by $\theta$, and the Value Network $v_\omega(s)$, parameterized by $\omega$. The Actor selects actions $a \in \mathcal{A}$ based on the policy, while the Critic evaluates actions using TD error to determine their effectiveness. TD error, denoted as $\delta_t$, is calculated as the difference between the TD Target $r_{t}+\gamma v_\omega(s_{t+1})$ and the current state's value $v_\omega(s_t)$, where $r_{t}$ is the reward obtained from the environment and $\gamma$ is the discount factor. $\delta_t$ of A2C serves as the advantage function, influencing the probability of selecting actions in future steps based on their effectiveness. Positive advantages increase the likelihood of choosing an action, while negative advantages decrease the probability of selection. 
The network parameters are updated at each step using $\delta_t$ as follows:
$\theta \leftarrow \theta-\beta \cdot \delta_t \cdot \partial log \pi_\theta (a_t|s_t) / \partial \theta$ for Actor and
$\omega \leftarrow \omega - \alpha \cdot \delta_t \cdot \partial v_\omega (s_t) / \partial \omega$ for Critic. $\beta$ and $\alpha$ represent the learning rates for the Actor and Critic, respectively.

\subsection{Advantage Actor-Critic with Reasoner Overview}
\begin{figure*}[t]
  \centering
  \includegraphics[width=0.85\textwidth]{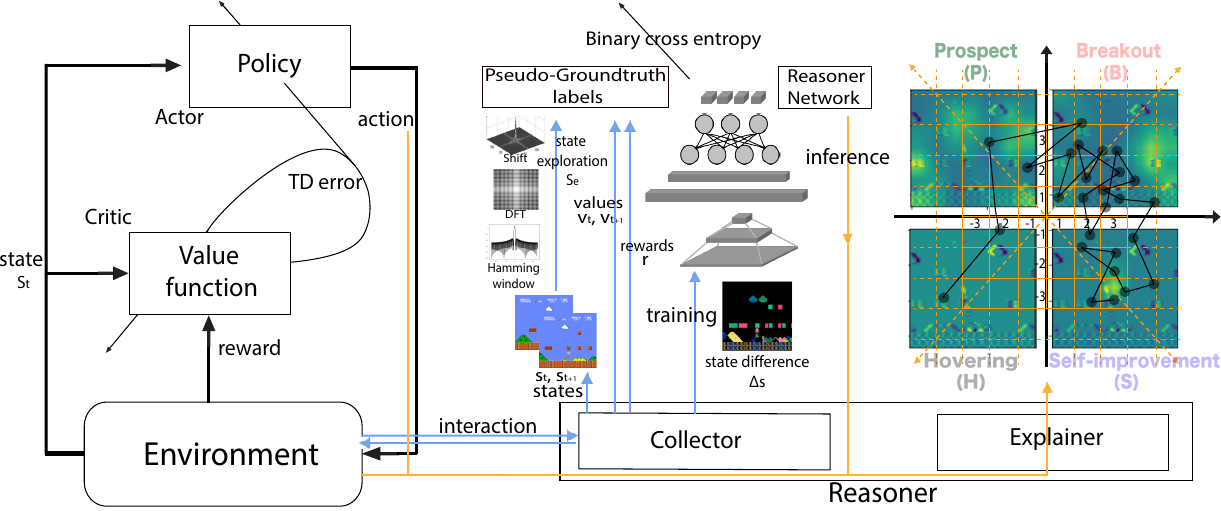}
  \caption{The proposed A2CR framework incorporates a Reasoner module to provide interpretable explanations of the agent's behavior. The blue flow lines depict the Collector's process of collecting data, generating Pseudo-Groundtruth labels, and training the Reasoner Network. The yellow flow lines represent the process and results of reasoning by the Explainer once the model is well-trained.}
  \label{fig:A2CR_Architecture}
\end{figure*}

As shown in Figure \ref{fig:A2CR_Architecture}, our proposed A2CR retains the original architecture of A2C but integrates an additional network called the Reasoner Network, parameterized by $\phi$, denoted by $R_\phi(\Delta s)$, where the input $\Delta s = s_{t+1} - s_{t}$ represents the pixel-wise difference between two adjacent state frames obtained from the A2C agent's interactions with the environment.
The Reasoner plays two roles: the Collector and the Explainer. The Collector employs separate agents, independent of the A2C agents, to collect data and train $R_\phi$ simultaneously. The Explainer utilizes the well-trained model to interpret the behavior of the A2C agents.

\subsection{Interpretation of agent actions}
To interpret an agent's actions effectively, we rely on two fundamental assumptions: (1) All actions taken by the agent are deemed rational. This assumption serves as the foundation for explaining the agent's behavior, ensuring that any derived explanation holds significance and motivation. Therefore, even when confronted with seemingly irrational behavior, we begin by assuming its underlying reasonability and subsequently explore contradictory explanations if necessary. (2) The agent exhibits a tendency toward ``laziness" or a lack of subjective motivation. This assumption aligns with the RL principles, recognizing that agents typically avoid venturing into unknown environments unless prompted by external stimuli such as rewards. It reflects the agent's inclination to remain in relatively stable environments to ensure its survival.
The two assumptions form the basis for our subsequent categorization of action purposes.

Consider an MDP denoted by $(s_t, a_t, r_t, s_{t+1})$, where at time $t$, the agent takes action $a_t$ in state $s_t$, receives a reward $r_t$, and transitions to a new state $s_{t+1}$. The difference in state values before and after the action, $v_\omega(s_{t+1}) - v_\omega(s_t)$, represents the ``indirect gain" of $a_t$, reflecting the increase in the environmental advantage enabled by the action. Additionally, the reward $r_t$ represents the ``direct gain" of $a_t$. Therefore, we define the ``total gain (G)" of $a_t$ as $w_1 \left( v_\omega(s_{t+1}) - v_\omega(s_t) \right) + (1-w_1) r_t$, where $w_1 \in [0,1]$ is a weighting parameter.
Moreover, we can measure the general change in the environment before and after an action $a_t$ through the difference between the states, termed ``state exploration ($S_e$)". In video games, $S_e$ can be obtained by comparing pixel-wise differences between two frames, encompassing areas of overlap, disappearance, and appearance. Specifically, we define $S_e$ as the sum of the Frobenius norms of the common area difference, disappeared area, and appeared area, i.e., $S_e = \lVert \text{common area difference} \rVert_{\rm F} + \lVert \text{disappeared area} \rVert_{\rm F} + \lVert \text{appeared area} \rVert_{\rm F}$.

$G$ and $S_e$ provide distinct perspectives for evaluating the impact of action $a_t$. In the context of Atari video games, we classify explanations of the agent's actions into four categories based on $G$ and $S_e$:
(1) Breakout, encoded (1,1): This category encompasses actions that result in significant state changes and substantial gains. It represents the agent making noteworthy progress in the game, such as entering a new level, defeating opponents, or achieving major milestones associated with high rewards.
(2) Self-improvement, encoded (1,0): Actions falling under this category yield significant gains with small state changes. The agent prefers such actions as they offer a relatively safe pathway to obtain rewards, such as acquiring gold, equipment, or experience in a game.
(3) Hovering (H), encoded (0,0): This category comprises actions that lead to smaller state changes and gains, often indicating mundane or conservative behavior. It includes routine actions, repetitive exploration of already known states, and stop-in-place or null actions.
(4) Prospect (P), encoded (0,1): Actions in this category result in substantial state changes but poor gains in the short term. However, they may signify the agent's exploration of the environment and preparation for potential long-term benefits. Examples include preliminary actions to gain an advantage, such as early attacks on opponents and dangerous exploration moves. Although these early actions involve risks and offer no immediate rewards, they may lead to greater benefits upon successful execution.

The Reasoner Network plays a crucial role in gaining valuable insights into the agent's decision-making process by classifying the purpose of each action of the A2C agent and conducting subsequent analysis.

\subsection{Structure of A2CR}
As mentioned earlier, A2CR consists of three interconnected networks: $\pi_\theta$, $v_\omega$, and $R_\phi$. The Actor selects actions based on learned policies, the Critic observes the actions and provides critical evaluation, and the Reasoner interprets the actions taken by the A2C agent. 
Our approach preserves the original architecture of $\pi_\theta$ and $v_\omega$ in A2C but integrates $R_\phi$ alongside them. This integration ensures that $R_\phi$ does not interfere with the other two networks' training, making it widely applicable to various Actor-Critic structures. 

$\pi_\theta(a|s)$ and $v_\omega(s)$ share the input state features obtained from three convolutional layers, each connected to a ReLU activation function. After obtaining the state features, $\pi_\theta$ utilizes two fully connected layers, followed by ReLU and softmax activation functions respectively to map the state features to a vector. This vector has the same length as the size of the action set $\mathcal{A}$ and is known as policy, which is a probability distribution. $v_\omega$ also has two fully connected layers, but omits the final softmax activation function, outputting a scale known as the state value. 
$R_\phi(\Delta s)$ has the same convolutional structure as A2C but is specifically designed to capture the features of stage differences $\Delta s$. It has an additional fully connected layer compared to $\pi_\theta$. The output of $R_\phi$ is a vector of length four, representing the probability of predictions for the four main interpretation categories. 
The Reasoner serves two roles: (1) Collector, responsible for $R_\phi$ training data collection, and (2) Explainer, utilizing the well-trained $R_\phi$ to classify the action purposes of A2C agents.

\subsubsection{Collector: data collection}
The role of the Collector is to obtain training data for $R_\phi$, while ensuring that the data collection process is synchronized with the A2C training process. 
This synchronization guarantees that $R_\phi$ accurately reflects the level of training achieved by A2C agents.
% This synchronization allows for simultaneous training of $R_\phi(\Delta s)$ and A2C Networks.
Importantly, the Collector leverages A2C networks but does not disrupt A2C training. This is achieved by employing independent agents within $R_\phi$ to interact with the environment, with their behavior controlled by the parameters of the A2C. This design ensures that the Collector can gather highly relevant data for the A2C agent while not interfering with the A2C training process. 
To ensure a diverse range of actions, the Collector employs multiple agents to collect data, which is subsequently stored in a container called Exploring Pool ($\mathcal{E}$).
Whenever new data is added to the $\mathcal{E}$, a classification label of interpretation is automatically calculated based on the existing data within the pool. This label, known as the ``Pseudo-Groundtruth" label, serves as the supervised label for $R_\phi$ training.  
$\mathcal{E}$ has a fixed capacity $N$ and is constantly updated during A2CR training, with old data being continuously dropped to accommodate new data.

\subsubsection{Explainer: action explanation}
The Explainer component of Reasoner aims to classify the purpose and provide interpretations of each action undertaken by the A2C agent. This is achieved by training $R_\phi(\Delta s)$ to perform a classification task using the state difference $\Delta s = s_{t+1} - s_t$ as inputs. The resulting classification enables various interpretation and inference tasks, including (1) saliency maps based on the agent's action purposes, (2) early detection of agent failures, (3) identification of the agent's level of exploration, and (4) metrics for RL training completion. By leveraging the classification results of the well-trained $R_\phi$, we gain valuable insights into the agent's behavior and decision-making process, thereby enhancing the understanding of Actor-Critic RL models.

\subsection{A2CR training} 
\label{sec:A2CR-training}

As the training of $R_\phi$ relies on the data collected by the Collector, which may initially be of lower quality due to the underfitting in the Policy and Value networks, it would be computationally inefficient to train $R_\phi$ at this early stage. To address this, we propose an optional strategy of setting a predetermined critical number of frames, such as 4/5 of the total number, before initiating the training of $R_\phi$. This ensures that $R_\phi$ training begins only when the A2C networks have achieved a suitable level of proficiency. The A2CR training process is depicted in Algorithm \ref{alg:trainingA2CR}.

\subsubsection{Training of Actor and Critic}
The training process for the Actor and Critic in A2CR follows the methodology employed in the original A2C approach. The TD error and policy gradient serve as the foundation for updating $v_{\omega}$ and $\pi_{\theta}$, respectively. To strike a balance between the critic loss ($L_c$) and actor loss ($L_a$), we introduce a weighting parameter $\rho_1$. Additionally, we incorporate an entropy bonus ($H$) with a parameter $\rho_2$ into the total loss. This entropy bonus encourages the policy entropy to increase, enhancing the model's exploratory capabilities and preventing premature convergence to local optima. The total loss can be expressed as $L = L_a + \rho_1 L_c - \rho_2 H$.

\subsubsection{Training of Reasoner}
The parameters of $R_{\phi}$ are updated regularly based on the data available in the Exploring Pool, which is continuously refreshed during the training process. This constant updating of the pool, along with its fixed capacity, ensures a smooth training process. For the loss function, we employ Binary Cross Entropy (with logits) for Multi-Class classification. Specifically, assuming that the Pseudo-Groundtruth label has $M$ classes, $R_{\phi}$ predicts the probability of a given data sample belonging to the $j$-th class as $p_j$. Using one-hot encoding, we represent the $j$-th element of the Pseudo-Groundtruth label as $y_j$. The Reasoner loss can be expressed as: $L_r = -\sum_{j=1}^M y_j \log(s(p_j))$, where $s(x) = 1/(1+e^{-x})$ denotes the sigmoid function.

\begin{algorithm}[H]
\caption{Training of Advantage Actor-Critic with Reasoner}
\label{alg:trainingA2CR}
\textbf{Initialize} Policy Network $\pi_{\theta} (a|s)$, Value Network $v_{\omega} (s)$, and Reasoner Network $R_{\phi} (\Delta s)$\\
\textbf{Initialize} the Exploring Pool $\mathcal{E}$ with a capacity of $N$\\
\textbf{Parameter}: weights $\rho_1$, $\rho_2$, $w_1$, and discount factor $\gamma$ \\
\textbf{Output}: Well-trained A2CR 
    \begin{algorithmic}[1] %enables line numbers
    \FOR{\texttt{episode = 1, ..., M}}
        \STATE For A2C agent: observe initial state $s_0$
        \FOR{\texttt{t = 1,..., T}}
            \STATE Sample action $a_t \sim \pi_{\theta} (a|s_t)$ 
            \STATE Calculate the entropy of distribution determined by $\pi_{\theta}$:  $H = -\sum_{x \in \pi} p(x) log p(x)$ 
            \STATE Obtain reward $r_t$ and new state $s_{t+1}$
            \STATE Calculate TD target: $y_t = r_t + \gamma  v_{\omega} (s_{t+1})$ 
            \STATE Get Advantage (TD error): $\delta_t = y_t - v_{\omega} (s_{t})$ 
            \STATE Critic loss: $L_c = \delta_t^2$ 
            \STATE Actor loss: $L_a = -log \pi_\theta (a_t|s_t) \delta_t$ 
            \STATE Update $\omega$ and $\theta$ by minimizing total loss $L= L_a + \rho_1 L_c  - \rho_2 H$
        \ENDFOR
        \STATE  For Reasoner agent: observe initial state $s_0$ \\
        (optional: start only if \texttt{episode} $>$ \texttt{4M/5})
        \FOR{\texttt{t = 1,..., T}}
            \STATE Sample action $a_t \sim \pi_{\theta} (a|s_t)$ 
            \STATE Obtain reward $r_t$ and new state $s_{t+1}$ 
            \STATE  State difference: $\Delta s_t = s_{t+1} - s_t$ 
            \STATE  State value difference: $\Delta v_t = v_{\omega} (s_{t+1}) - v_{\omega} (s_{t})$ 
            \STATE Calculate the total gain value:  \\
            $G_t = w_1(v_{\omega} (s_{t+1})-v_{\omega}(s_{t})) + (1-w_1) r_t$
            \STATE  Calculate the state exploration value $S_{e,t}$ using phase correlation 
            \STATE  Sample half of the data randomly from the exploring pool: $\mathcal{E}_0 \sim \mathcal{E}$, $\mathcal{E}_0 \subsetneq \mathcal{E}$  
            \STATE  Get Pseudo-Groundtruth label: \\ $l_{t} = (\mathbb{1}_{G_t \geq  2\sum_{i\in \mathcal{E}_0} G_{i}/N}, \mathbb{1}_{S_{e,t} \geq  2\sum_{i\in \mathcal{E}_0} S_{e,i}/N})$
            \STATE  Add $(G_t, S_{e,t}, l_t)$ to $\mathcal{E}$ and update $\mathcal{E}$ at regular intervals
            \STATE  Reasoner loss: $L_r=$ cross entropy with logits using the data ($\Delta s_t$, $l_t$)
            \STATE  Update $\phi$ by minimizing $L_r$
        \ENDFOR
    \ENDFOR
    \end{algorithmic}
\end{algorithm}

\subsection{Pseudo-Groundtruth labels} 
We utilize the data collected by the Collector to provide supervised training for $R_\phi$, with the labels of the collected data dependent on factors $G$ and $S_e$, both of which require calculation. The calculation of $G$ is straightforward: 
$$G_t = w_1(v_{\omega} (s_{t+1})-v_{\omega}(s_{t})) + (1-w_1) r_t$$
On the other hand, acquiring $S_e$ necessitates measuring the shift between adjacent states. In determining the amount of shift, we employ the principle of Phase Correlation. Given two states $s_{t+1}$ and $s_t$ with dimensions $W\times H$ pixels, we assume they undergo a relative shift denoted by $\Delta x$ along the x-axis and $\Delta y$ along the y-axis. To reduce boundary effects, we first apply the Hamming window to the states. This allows us to approximate the circular shift of the two states:
$$s_{t+1}(x,y) \approx  s_{t}((x-\Delta x) \ \text{mod}\ W, (y-\Delta y) \ \text{mod}\ H )$$
Using the shift theorem of 2D Discrete Fourier Transform (DFT), we derive the following equation:
$$\mathfrak{F}_{s_{t+1}}(x,y) = \mathfrak{F}_{s_{t}}(x,y) e^{-2\pi i (\frac{x\Delta x}{W} + \frac{y \Delta y}{H})}$$
From this, the cross-power spectrum can be expressed as:
\begin{align*} 
R(x,y) 
& = 
\frac{\mathfrak{F}_{s_{t}}(x,y) \circ \overline{\mathfrak{F}_{s_{t+1}}}(x,y)}{|\mathfrak{F}_{s_{t}}(x,y) \circ \overline{\mathfrak{F}_{s_{t+1}}}(x,y)|} \\
& = 
\frac{\mathfrak{F}_{s_{t}}(x,y) \circ \overline{\mathfrak{F}_{s_{t}}}(x,y)e^{2\pi i (\frac{x\Delta x}{W} + \frac{y \Delta y}{H})} }{|\mathfrak{F}_{s_{t}}(x,y) \circ \overline{\mathfrak{F}_{s_{t}}}(x,y)e^{2\pi i (\frac{x\Delta x}{W} + \frac{y \Delta y}{H})}|} \\
& = 
e^{2\pi i (\frac{x\Delta x}{W} + \frac{y \Delta y}{H})}.
\end{align*} 
Next, we introduce the 2D shifted impulse function defined as $\delta(x-a,y-b) \stackrel{\text{def}}{=} \delta(x-a)\delta(y-b)$ where $a$ and $b$ are arbitrary constants. This function can be obtained as the product of two Dirac-Delta functions $\delta(x-a) = \lim_{n\to \infty} n\mathbb{1}_{[-\frac{1}{2n} \leq x-a \leq \frac{1}{2n}]}$ and $\delta(y-b) = \lim_{n\to \infty} n\mathbb{1}_{[-\frac{1}{2n} \leq y-b \leq \frac{1}{2n}]}$. Applying the Fourier transform to $\delta(x-a,y-b)$ yields:
\begin{align*} 
\mathfrak{F}\left(\delta(x-a,y-b)\right)
= &
% \int_{-\infty}^{\infty} \int_{-\infty}^{\infty} 
\underset{R^2}{\iint} \delta(x-a,y-b) e^{-i2\pi(x \xi_x + y\xi_y)} dxdy \\
= & 
e^{-i2\pi(a \xi_x + b\xi_y)},
\end{align*} 
where $\xi_x$ and $\xi_y$ are oscillation frequencies along x and y axes. This leads to 
$\mathfrak{F}^{-1}\left(e^{-i2\pi(a \xi_x + b\xi_y)} \right) = \delta(x-a,y-b)$.
Thus, the inverse Fourier transform of the cross-power spectrum $R(x,y)$ is: 
$$
\mathfrak{F}^{-1}\left(R(x,y) \right) =
\mathfrak{F}^{-1}\left( e^{i2\pi (\frac{x\Delta x}{W} + \frac{y \Delta y}{H})} \right) = \delta(x+\Delta x , y+\Delta y).
$$
This indicates that we can obtain the shifts $\Delta x$ and $\Delta y$ by finding the location of the peak in the above results: 
$(\Delta x, \Delta y) = \mathrm{argmax}_{x,y} \left( \mathfrak{F}^{-1}\left(R(x,y) \right) \right)$. In practice, the fast Fourier transform (FFT) can be used to efficiently compute the discrete Fourier transform and its inverse. Once $\Delta x$ and $\Delta y$ are obtained, we can get the corresponding value of factor $S_e$ using the following relationship:
\begin{align*} 
S_{e,t} =  &  \underbrace{ \| s_{t+1} {\scriptstyle [0:W-\Delta x,0:H-\Delta y]} - s_{t} {\scriptstyle [\Delta x:W, \Delta y:H]} \|_F }_{\text{common area difference}}   \\
& + 
\underbrace{ \| s_{t}{\scriptstyle [0:W, 0:\Delta y]} + s_{t}{\scriptstyle [0:\Delta x, 0:H]} -  s_{t}{\scriptstyle [0:\Delta x, 0:\Delta y]}  \|_F }_{\text{disappeared area}}\\
& + 
\|  s_{t+1}{\scriptstyle [W-\Delta x:W, 0:H]} + s_{t+1}{\scriptstyle [0: W, H-\Delta y:H]} \\
& \quad \underbrace{ \ - 
s_{t+1}{\scriptstyle [W-\Delta x:W, H-\Delta y:H]} \|_F \quad \quad \quad \quad \quad}_{\text{appeared area}}
\end{align*} 

\begin{figure}[t]
  \centering
  \includegraphics[width=0.5\textwidth]{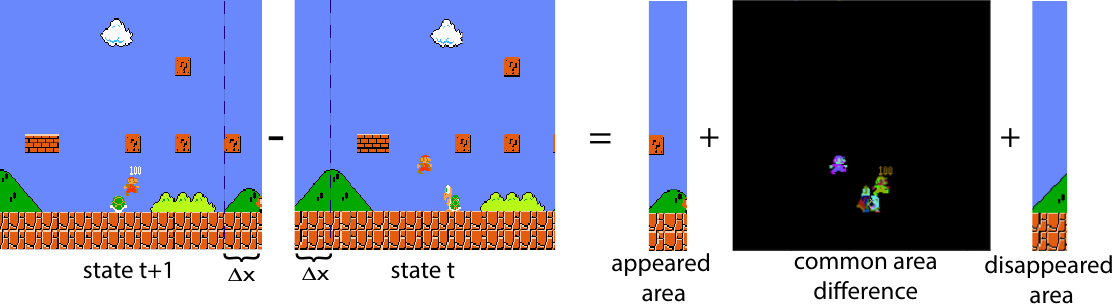}
  \caption{The state exploration $S_e$ is computed as the sum of the Frobenius norms of three areas: the area where the new state appears, the area where the old state disappears, and the difference between the common areas of both states. It's worth noting that in Super Mario Bros, $\Delta y$ is always zero.}  
  \label{fig:stage_exploration}
\end{figure}
In the above relation, $\|.\|_F$ is the Frobenius Norm and we assume that $s_{t+1}$ is shifted to the upper right relative to $s_{t}$ with $\Delta x\geq0$ and $\Delta y\geq0$. 
For the Open AI Super Mario Bros, we observed that $\Delta y$ is always zero. Figure \ref{fig:stage_exploration} illustrates the procedure for obtaining the value of $S_e$ in this case.

After computing $G_{t}$ and $S_{e,t}$, we need to assign a Pseudo-Groundtruth label to them. 
Suppose the current Exploring Pool $\mathcal{E}$ with a capacity of $N$ contains the data tuples $\{(G_{i}, S_{e,i}, l_{i})\}_{i=t-N}^{t-1}$. 
We start by randomly sampling half of the data to shuffle their order, which yields a subset $\mathcal{E}_0$, then the 2D 0-1 encoded label for $(G_{t},S_{e,t})$ is given by:
$$
l_{t} = \left(\mathbb{1}_{G_t \geq  2\sum_{i\in \mathcal{E}_0} G_{i}/N},  \ 
\mathbb{1}_{S_{e,t} \geq  2\sum_{i\in \mathcal{E}_0} S_{e,i}/N} \right),
$$
where $\mathbb{1}$ is the indicator function. Once $l_t$ is obtained, the tuple $(G_{t},S_{e,t}, l_t)$ is stored in the $\mathcal{E}$, and the oldest tuple with an index of $i=t-N$ is removed to maintain the pool's up-to-date status.

\subsection{Convergence of Pseudo-Groundtruth labels}
An essential attribute of a quality Collector is its ability for collecting labels that accurately represent the environment in which it operates. In an MDP, where the future evolution of Markov processes is independent of the past, it is possible to consider two random variables representing the same feature collected by the agents before and after a sufficiently long interval as approximately independent. 
For example, if a feature $X_0$ is generated by agents and follows the distribution $F_0(x)$ in the global environment, throughout the long training process, this feature can be modeled as a sequence of random variables $X_1, ..., X_n$ at different update moments of $\mathcal{E}$. The variables $X_1, ..., X_n$ are considered to be nearly independent and exhibit approximate convergence towards $X_0$. The data in $\mathcal{E}$ at each update time can be treated as a random sample from the corresponding $X_i$.
In Theorem \ref{theorem_prop} and its corollary, we introduce the convergence of the proportion of labels in $\mathcal{E}$ in the context of Reasoner training.

\begin{theorem}
\label{theorem_prop}
Assume $X_1, ..., X_n :\Omega \rightarrow \mathcal{R}$ are independent random variables defined on a common probability space $(\Omega, \mathcal{F},\mathcal{P})$. They have continuous cumulative distribution function $F_i(x)$ and expectations $E(X_i) = \mu_i<\infty$. Let $\{(x_i^{(1)}, x_i^{(2)}, ..., x_i^{(N)})\}_{i=1}^n$ be a sequence of samples of $\{X_i\}_{i=1}^n$ with a sequence of mean values$\{\bar{x}_i\}_{i=1}^n$. The corresponding label sequence is defined as $\{L_i | L_i = \mathbb{1}_{X_0 \geq \bar{x}_i}\}_{i=1}^n$, where $X_0 \sim F_0(x)$ with expectation $\mu_0$. If $\{F_i(x)\}_{i=1}^n$ converge pointwise to $F_0$ and $\{\mu_i(x)\}_{i=1}^n$ converge to $\mu_0$ in probability, then $\sum_{i=1}^n L_i/n$ converges to $F_0(\mu_0)$ in probability.
\end{theorem}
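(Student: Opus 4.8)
\section*{Proof proposal}

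The plan is to show that $\frac{1}{n}\sum_{i=1}^n L_i$ concentrates around the deterministic value $F_0(\mu_0)$ by splitting the analysis into a \emph{mean} part and a \emph{fluctuation} part and then applying Chebyshev's inequality. Write $\bar{X}_i = \frac{1}{N}\sum_{k=1}^N X_i^{(k)}$ for the random sample mean whose realization is $\bar{x}_i$. Each label $L_i$ is a Bernoulli variable, and by the tower property its unconditional mean $p_i := E[L_i]$ equals $E\big[F_0(\bar{X}_i)\big]$ (up to the orientation of the inequality), because conditioning on the sampling information freezes the threshold $\bar{X}_i$ and leaves a single indicator against the law $F_0$. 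The two objects I would then control separately are the averaged mean $\frac{1}{n}\sum_{i=1}^n p_i$ and the centered sum $\frac{1}{n}\sum_{i=1}^n (L_i - p_i)$.

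First I would prove $\frac{1}{n}\sum_{i=1}^n p_i \to F_0(\mu_0)$. The key is to establish $p_i \to F_0(\mu_0)$ for each $i$ and then invoke the Ces\`aro theorem. Since the thresholds satisfy $\bar{X}_i \to \mu_0$ in probability and $F_0$ is continuous at $\mu_0$, the continuous mapping theorem gives $F_0(\bar{X}_i) \to F_0(\mu_0)$ in probability; boundedness of $F_0$ in $[0,1]$ then upgrades this to $p_i = E[F_0(\bar{X}_i)] \to F_0(\mu_0)$ by bounded convergence. Where the variable compared against the threshold carries its own law $F_i$ rather than the limiting $F_0$, I would additionally use the pointwise convergence $F_i \to F_0$ through a sandwich: fixing $\varepsilon>0$ and choosing $\eta$ with $|F_0(t)-F_0(\mu_0)|<\varepsilon$ on $[\mu_0-\eta,\mu_0+\eta]$, monotonicity bounds $F_i(\bar{X}_i)$ between $F_i(\mu_0-\eta)$ and $F_i(\mu_0+\eta)$ on the high-probability event $\{|\bar{X}_i-\mu_0|\le\eta\}$, and pointwise convergence pins both endpoints near $F_0(\mu_0)$.

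Next I would handle the fluctuation term. Because $X_1,\dots,X_n$ are independent---and, as the paragraph preceding the theorem argues, samples separated by a long interval in the Markov process may be treated as independent---the labels $L_i$ depend on disjoint blocks of randomness and are mutually independent bounded variables. Hence $\mathrm{Var}\!\left(\frac{1}{n}\sum_{i=1}^n L_i\right)=\frac{1}{n^2}\sum_{i=1}^n \mathrm{Var}(L_i)\le \frac{1}{4n}\to 0$, so Chebyshev's inequality yields $\frac{1}{n}\sum_{i=1}^n (L_i - p_i)\to 0$ in probability. Combining the two pieces through the triangle inequality $\left|\frac{1}{n}\sum_{i=1}^n L_i - F_0(\mu_0)\right|\le \left|\frac{1}{n}\sum_{i=1}^n (L_i - p_i)\right| + \left|\frac{1}{n}\sum_{i=1}^n p_i - F_0(\mu_0)\right|$ gives the claimed convergence in probability.

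I expect the main obstacle to be the first step: coupling the three separate convergences---the \emph{pointwise-only} convergence $F_i \to F_0$, the in-probability convergence of the \emph{random} thresholds $\bar{X}_i \to \mu_0$, and the continuity of $F_0$ at the single point $\mu_0$---into a clean statement that each $p_i$, and hence their Ces\`aro average, tends to $F_0(\mu_0)$. Pointwise distributional convergence is too weak to evaluate a moving distribution function at a moving random argument, so the continuity of $F_0$ at $\mu_0$ must be used essentially, via the $\varepsilon$--$\eta$ sandwich that simultaneously absorbs the threshold fluctuations and the gap between $F_i$ and $F_0$. A secondary subtlety worth stating explicitly is the independence structure underlying the concentration step, which is precisely what the Markov-property discussion preceding the theorem is intended to justify; without it the centered sum need not concentrate.
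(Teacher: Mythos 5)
Your proposal is correct (modulo two defects it inherits from the theorem statement itself, both of which you flag) and it shares the paper's overall skeleton, but it executes the key step by a genuinely different and cleaner route. Like the paper, you split $\frac{1}{n}\sum_{i=1}^n L_i$ into a centered fluctuation term, killed by a variance bound $\mathrm{Var}\bigl(\frac{1}{n}\sum_{i=1}^n L_i\bigr) \le \frac{1}{4n}$ and Chebyshev (the paper invokes Kolmogorov's maximal inequality, but after setting $K=n$ that is exactly the same Chebyshev bound for independent summands), plus an averaged-mean term. The difference lies in how the averaged means are driven to $F_0(\mu_0)$. The paper routes through empirical CDFs: $S_n/n \approx \frac{1}{n}\sum_i \hat{F}_i(\bar{x}_i)$, then $\hat{F}_i \to F_i$ a.s.\ as $N \to \infty$, then a first-order Taylor expansion $F_i(\bar{x}_i) = F_i(\mu_i) + F_i'(\mu_i)(\bar{x}_i-\mu_i) + o(\bar{x}_i - \mu_i)$ with $\bar{x}_i - \mu_i = O(1/\sqrt{N})$, and finally $\frac{1}{n}\sum_i F_i(\mu_i) \to F_0(\mu_0)$ by pointwise convergence plus the continuous mapping theorem. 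You instead compute $E[L_i]$ exactly by the tower property as $E[F_0(\bar{X}_i)]$ (orientation aside) and then need only continuous mapping, bounded convergence, and Ces\`aro. This buys you three things: you never introduce the empirical-CDF layer, you do not tacitly assume $F_i$ differentiable (the paper's Taylor step does, though only continuity is hypothesized) or finite variances (implicitly needed for the paper's $O(1/\sqrt{N})$ claim), and you keep the two indices $n$ and $N$ cleanly separated, whereas the paper's chain interleaves limits in $N$ with limits in $n$. Your $\varepsilon$--$\eta$ sandwich is also the honest way to evaluate a pointwise-only convergent sequence $F_i$ at a random threshold, a difficulty the paper sidesteps by only ever evaluating $F_i$ at the deterministic points $\mu_i$. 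One caveat to state explicitly if you write this up: both your proof and the paper's require each label to use a fresh, independent copy of $X_0$; with the single shared $X_0$ that the statement literally posits, the $L_i$ are not independent and $\frac{1}{n}\sum_{i=1}^n L_i \to \mathbb{1}_{X_0 \geq \mu_0}$, a nondegenerate Bernoulli variable rather than the constant $F_0(\mu_0)$, so the concentration step (yours and the paper's alike) would fail --- your closing remark about the independence structure is exactly the right flag, and it is the assumption the Markov-gap discussion preceding the theorem is meant to justify.
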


\begin{proof}
Define the empirical distribution function $\hat{F}_i(t)=\frac{1}{n} \sum_{j=1}^N \mathbb{1}_{x_i^{(j)} \leq t}$, $i=1,2,...,n$, then $L_i \sim$ Bernoulli$(\hat{F}_i(\bar{x}_i))$ with mean $E(L_i) = \hat{F}_i(\bar{x}_i)$ and variance $var(L_i) = \hat{F}_i(\bar{x}_i) (1 - \hat{F}_i(\bar{x}_i))$. 
Let $S_n = \sum_{i=1}^n L_i$, According to Kolmogorov's inequality, we obtain:
$$
P\left(\max_{1 \leq n \leq K} \Big| S_n - \sum_{i=1}^n E(L_i) \Big| \geq \lambda \right) \leq \frac{1}{\lambda^2} \sum_{i=1}^K var(L_i), \quad 
$$
$$
\forall \lambda > 0, K\geq n
$$
Setting $K=n$ and using the constraints $0 \leq \hat{F}_i(t) \leq 1$,
\begin{align*} 
P\left( \Big| \frac{S_n}{n} - \frac{1}{n}\sum_{i=1}^n \hat{F}_i(\bar{x}_i) \Big| \geq \lambda \right) 
& \leq  
\frac{1}{\lambda^2 n^2} \sum_{i=1}^n \hat{F}_i(\bar{x}_i) (1 - \hat{F}_i(\bar{x}_i)) \\
& \leq  \frac{1}{4n\lambda^2},
\end{align*}
which implies that: $\frac{S_n}{n} - \frac{1}{n}\sum_{i=1}^n \hat{F}_i(\bar{x}_i) \overset{p}{\to} 0$. By the strong law of large numbers,  $\hat{F}_i(t)$ converges to $F_i(t)$ as $N\rightarrow \infty$ almost surely for every $t$. In addition, simple algebra by Taylor's theorem yields:
$$\frac{1}{n}\sum_{i=1}^n F_i(\bar{x}_i) = \frac{1}{n}\sum_{i=1}^n \Big( F_i(\mu_i) + F_i'(\mu_i)(\bar{x}_i - \mu_i) + o(\bar{x}_i - \mu_i) \Big) $$
Combining the above results, taking into account the property $\bar{x}_i - \mu_i = O(1/\sqrt{N})$ = $o(1) \overset{p}{\to} 0$, we can arrive at:
$$\frac{S_n}{n} \overset{p}{\to} \frac{1}{n}\sum_{i=1}^n \hat{F}_i(\bar{x}_i) \overset{a.s.}{\to} \frac{1}{n}\sum_{i=1}^n F_i(\bar{x}_i) \overset{p}{\to} \frac{1}{n}\sum_{i=1}^n F_i(\mu_i)$$
Finally, we know that for any $\epsilon>0$,
\begin{align*}
& P(\Big|\frac{1}{n}\sum_{i=1}^n F_i(\mu_i)-F_0(\mu_0)\Big|>\epsilon) \\
\leq & 
P(\frac{1}{n}\sum_{i=1}^n \Big|F_i(\mu_i)-F_0(\mu_i)\Big|>\frac{\epsilon}{2}) \\
& + 
P(\frac{1}{n}\sum_{i=1}^n \Big|F_0(\mu_i)-F_0(\mu_0)\Big|>\frac{\epsilon}{2})
\end{align*}
where $P(\frac{1}{n}\sum_{i=1}^n \Big|F_i(\mu_i)-F_0(\mu_i)\Big|>\frac{\epsilon}{2}) {\to} 0$ can be derived from pointwise convergence of $F_i$, $P(\frac{1}{n}\sum_{i=1}^n \Big|F_0(\mu_i)-F_0(\mu_0)\Big|>\frac{\epsilon}{2}) {\to} 0$ is the result of convergence in probability under the continuous mapping theorem. Thus we proved $\frac{1}{n}\sum_{i=1}^n F_i(\mu_i) \overset{p}{\to} F_0(\mu_0)$, leading to $S_n/n \overset{p}{\to} F_0(\mu_0)$.
\end{proof}

As the training progresses, the feature values collected from both $G$ and $S_e$ progressively align with the global settings and feature distributions become more similar to that of the global environment. This is attributed to the agents' improved performance and expanded exploration of maps, providing empirical support for the assumptions in Theorem \ref{theorem_prop}.
For a more general case, assuming that we partition the interpretability of the agent's behavior using $d$ independent variables, the label dimension would be $d$. This leads us to derive the following Corollary \ref{corollary_prop}.

\begin{corollary}
\label{corollary_prop}
Assume $\{X_{ij}\}^{\scaleto{1 \leq i \leq n}{5pt}}_{\scaleto{1 \leq j \leq d}{5pt}}
:\Omega \rightarrow \mathcal{R}$ are n$\times$d independent random variables defined on a common probability space $(\Omega, \mathcal{F},\mathcal{P})$. They have continuous cumulative distribution function $F_{ij}(x)$ and expectations $E(X_{ij}) = \mu_{ij}<\infty$. For any $1 \leq j \leq d$, Let $\{(x_{ij}^{(1)}, x_{ij}^{(2)}, ..., x_{ij}^{(N)})\}_{i=1}^n$ be a sequence of samples of $\{X_{ij}\}_{i=1}^n$ with a sequence of mean values$\{\bar{x}_{ij}\}_{i=1}^n$. 
Define the $d$-dimension label sequence as 
$\{\vec{L_{i}}=(L_{i1},  L_{i2}, ... , L_{id}) | L_{ij} =  \mathbb{1}_{X_{0j} \geq \bar{x}_{ij}}\}_{i=1}^n$, where $X_{0j} \sim F_{0j}(x)$ with expectations $\mu_{0j}$.
If $\{F_{ij}(x)\}_{i=1}^n$ converge pointwise to $F_{0j}$ and $\{\mu_{ij}(x)\}_{i=1}^n$ converge to $\mu_{0j}$ in probability, then 
$\#(l_1, l_2, ... , l_d)/n$ converges to $\Pi_{j=1}^d (F_{0j}(\mu_{0j}))^{l_j} (1-F_{0j}(\mu_{0j}))^{1-l_j}$ in probability, where $\#$ is symbol for counting and $l_j \in \{0,1\}$.
\end{corollary}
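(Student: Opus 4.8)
The plan is to reduce the $d$-dimensional count to a single Bernoulli average and then reuse the one-dimensional machinery of Theorem \ref{theorem_prop} on each coordinate, the only genuinely new ingredient being the factorization granted by independence across the $d$ coordinates. First I would rewrite the normalized count as
$$\frac{1}{n}\#(l_1,\dots,l_d) = \frac{1}{n}\sum_{i=1}^n Y_i, \qquad Y_i := \prod_{j=1}^d \mathbb{1}_{L_{ij}=l_j},$$
so that $Y_i$ is itself a $\{0,1\}$-valued (hence Bernoulli) random variable recording whether sample $i$ realizes the prescribed label vector. Writing $q_{ij} := \hat F_{ij}(\bar x_{ij})^{l_j}(1-\hat F_{ij}(\bar x_{ij}))^{1-l_j}$ for the probability that coordinate $j$ contributes the factor $\mathbb{1}_{L_{ij}=l_j}=1$, the independence of $\{X_{ij}\}_{j=1}^d$ for each fixed $i$ makes the coordinatewise indicators independent, so that $E(Y_i)=\prod_{j=1}^d q_{ij}$ and $\mathrm{var}(Y_i)=E(Y_i)(1-E(Y_i)) \le 1/4$.

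With this reduction the argument parallels the proof of Theorem \ref{theorem_prop}. Applying Kolmogorov's inequality to $S_n=\sum_{i=1}^n Y_i$ and bounding each variance by $1/4$ gives
$$P\left(\Big|\frac{S_n}{n} - \frac{1}{n}\sum_{i=1}^n \prod_{j=1}^d q_{ij}\Big| \ge \lambda\right) \le \frac{1}{4n\lambda^2},$$
hence $\frac{S_n}{n} - \frac{1}{n}\sum_{i=1}^n \prod_{j=1}^d q_{ij} \overset{p}{\to} 0$. It therefore remains to show that the average $\frac{1}{n}\sum_{i=1}^n \prod_{j=1}^d q_{ij}$ converges in probability to $\prod_{j=1}^d (F_{0j}(\mu_{0j}))^{l_j}(1-F_{0j}(\mu_{0j}))^{1-l_j}$.

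For this last step I would argue coordinate by coordinate and then combine across the finitely many coordinates. For each fixed $j$, the chain established in Theorem \ref{theorem_prop} --- the strong law of large numbers giving $\hat F_{ij}(t)\overset{a.s.}{\to} F_{ij}(t)$ as $N\to\infty$, the Taylor expansion controlling $F_{ij}(\bar x_{ij})-F_{ij}(\mu_{ij})$ through $\bar x_{ij}-\mu_{ij}=O(1/\sqrt N)\overset{p}{\to}0$, and finally the two-term split using pointwise convergence $F_{ij}\to F_{0j}$ together with continuity of $F_{0j}$ and $\mu_{ij}\overset{p}{\to}\mu_{0j}$ --- yields $q_{ij}\overset{p}{\to} (F_{0j}(\mu_{0j}))^{l_j}(1-F_{0j}(\mu_{0j}))^{1-l_j}=:p_j^{(l_j)}$. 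Since $d$ is finite, the product of finitely many sequences converging in probability converges to the product of the limits, so $\prod_{j=1}^d q_{ij}\overset{p}{\to}\prod_{j=1}^d p_j^{(l_j)}$ coordinatewise in $i$, and a Cesàro-type triangle-inequality estimate (identical in form to the final display of Theorem \ref{theorem_prop}) transfers this to the average. Chaining the two convergences gives $S_n/n\overset{p}{\to}\prod_{j=1}^d (F_{0j}(\mu_{0j}))^{l_j}(1-F_{0j}(\mu_{0j}))^{1-l_j}$.

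I expect the main obstacle to be the averaging in this final step: one cannot simply invoke the one-dimensional Theorem $d$ times and multiply the resulting limits, because the average of a product is not the product of the averages. The resolution is precisely to exploit independence across coordinates to factorize $E(Y_i)$ \emph{before} averaging, so that each summand $\prod_j q_{ij}$ is a single product whose factorwise convergence in $i$ can be pushed through the Cesàro estimate; keeping the in-probability mode of convergence intact across the finite product, rather than the deterministic convergence one would get at a fixed point, is the technical crux.
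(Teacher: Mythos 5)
Your proof is correct (at the same level of rigor as the paper's own arguments), but it takes a genuinely different route. The paper proves the corollary by a recursive \emph{conditional} factorization: it defines $T(k) = \#(0\lor 1,\dots,0\lor 1,l_k,\dots,l_d)/n$ with the first $k-1$ coordinates unconstrained, writes $T(1)$ as the conditional proportion $\#(l_1,l_2,\dots,l_d)/\#(0\lor 1,l_2,\dots,l_d)$ times $T(2)$, invokes Theorem~\ref{theorem_prop} as a black box to send that conditional proportion to $F_{01}(\mu_{01})^{l_1}(1-F_{01}(\mu_{01}))^{1-l_1}$, and then peels off the remaining coordinates one at a time until it reaches $T(d+1)=1$. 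You instead flatten the problem: you write the count as $\sum_i Y_i$ with $Y_i=\prod_j \mathbb{1}_{L_{ij}=l_j}$, use coordinate independence to factorize $E(Y_i)=\prod_j q_{ij}$ \emph{before} averaging, and then rerun the internal machinery of Theorem~\ref{theorem_prop} (Kolmogorov/Chebyshev concentration, SLLN for the empirical CDFs, Taylor expansion, and the Ces\`aro two-term split) once on the $Y_i$'s. Each approach has a trade-off. The paper's recursion is shorter and reuses Theorem~\ref{theorem_prop} verbatim, but it quietly applies that theorem to a conditional subsample whose size $\#(0\lor 1,l_2,\dots,l_d)$ is random rather than equal to $n$, and it needs a Slutsky-type step to multiply the successively converging factors --- neither point is justified in the paper. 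Your version avoids conditioning on random index sets entirely and makes explicit exactly where independence across coordinates is needed (your observation that one cannot just multiply the $d$ one-dimensional limits because the average of a product is not the product of averages is precisely the crux, and the paper never states it); the price is that you cannot cite Theorem~\ref{theorem_prop} as a black box and must reopen its proof. One small imprecision to fix: the independence of the indicators $\mathbb{1}_{L_{ij}=l_j}$ across $j$ for fixed $i$ requires independence of the reference variables $X_{0j}$ (and of the samples) across coordinates, not merely of the $X_{ij}$; this is the same assumption the paper phrases as ``the $d$ elements in each label $\vec{L_i}$ are independent,'' so you should state it in that form.
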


\begin{proof}
Consider the function T(k) given by:
$T(k) = \#(\underbrace{ 0\lor 1, \dots , 0\lor 1}_{k-1 \ \text{times}},l_k,l_{k+1}, \dots ,l_d)/n$. 
Using Theorem \ref{theorem_prop} and the condition that the $d$ elements in each label $\vec{L_{i}}$ are independent, we can get:
\begin{align*} 
T(1) = &\frac{\#(l_1, l_2, \dots , l_d)}{n} \\
= &  
\left( \frac{\#(1, l_2, \dots , l_d)}{n} \right)^{l_1}  \left( \frac{\#(0, l_2, \dots , l_d)}{n} \right)^{1-l_1} \\ 
= &  
\left( \frac{\#(1, l_2, \dots , l_d)}{\#(0\lor1, l_2, \dots , l_d)} 
\frac{\#(0\lor1, l_2, \dots , l_d)}{n}  \right)^{l_1} \\
&
\times \left( \frac{\#(0, l_2, \dots , l_d)}{\#(0\lor1, l_2, \dots , l_d)}
\frac{\#(0\lor1, l_2, \dots , l_d)}{n} \right)^{1-l_1} \\
\overset{p}{\to} & \big(F_{01}(\mu_{01})  T(2) \big)^{l_1} 
\big(1-F_{01}(\mu_{01})  T(2) \big)^{1-l_1} \\
= &  \big(F_{01}(\mu_{01})\big)^{l_1} \big(1-F_{01}(\mu_{01})\big)^{1-l_1} T(2) \\
\overset{p}{\to} & \dots    
\overset{p}{\to}  \Pi_{j=1}^{d-1} (F_{0j}(\mu_{0j}))^{l_j} (1-F_{0j}(\mu_{0j}))^{1-l_j} T(d) \\
\overset{p}{\to} &  
\Pi_{j=1}^{d} (F_{0j}(\mu_{0j}))^{l_j} (1-F_{0j}(\mu_{0j}))^{1-l_j}\qedhere
\end{align*} 
\end{proof}

Since our experiments involve two features $G$ and $S_e$, this corresponds to a specific case with $d=2$.

\section{Evaluation}
To evaluate our model's performance in an action-rich environment and also ensure readers' easy comprehension, we use eight distinct maps from the Super Mario Bros games~\cite{gym-super-mario-bros} as examples. All games use ``complex movement" action settings and are with an OpenAI Gym interface. Details of model configurations can be found in the Appendix.

\subsection{State saliency maps on agent intents}
\begin{figure*}[h]
  \centering
  \includegraphics[width=1.01\textwidth]{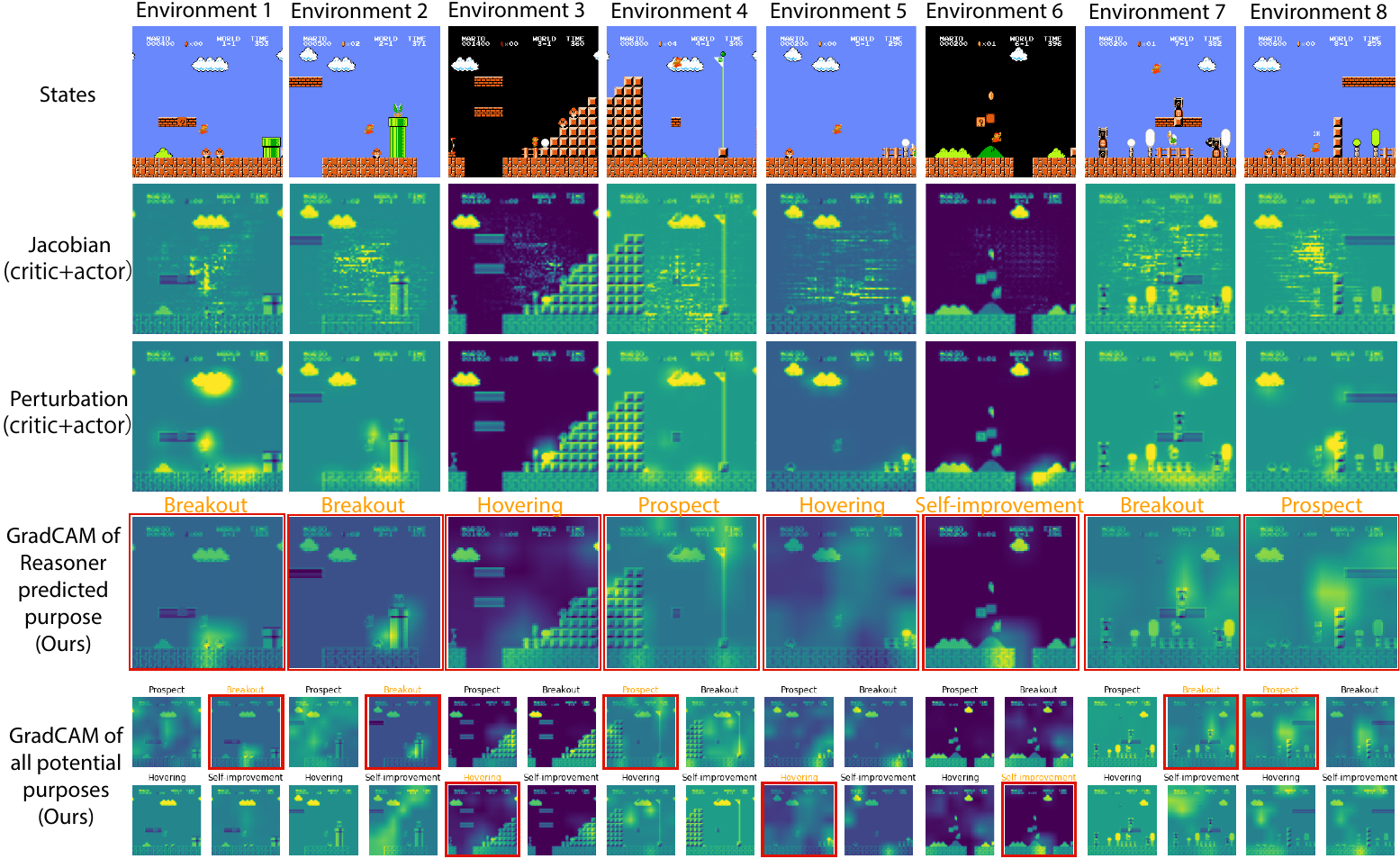}
  \caption{Comparison of Jacobian saliency, perturbation-based saliency to our purpose-driven saliency. One of the greatest strengths of our approach is that the saliency maps can be generated according to the underlying purpose of the agent's action.} 
  \label{fig:baselines_vs_ours}
\end{figure*}
The Reasoner Network can be effectively combined with visual explanation techniques to provide state saliency maps that reveal the key features and regions influencing the agent's intent during the decision-making process. By employing GradCam~\cite{selvaraju2017grad} on the Reasoner Network, the last two rows of Figure~\ref{fig:baselines_vs_ours} showcase the saliency maps that highlight important areas corresponding to the agent's intended purposes. These purposes include the one identified by the Reasoner's classification (borders outlined in red) as well as the other three potential purposes. Compared to the traditional methods of understanding agents such as Jacobian saliency ~\cite{wang2016dueling, zahavy2016graying} and Perturbation-based approach ~\cite{greydanus2018visualizing}, one of the greatest strengths of our approach is that the saliency maps can be generated according to the purpose of the agent’s actions.

Our saliency information not only informs an RL agent's decisions about ``important regions" but also offers more profound insights into how these ``important regions" serve specific and distinct purposes in the agent's actions. A clear example can be observed in the first column, where the regions highlighted by gradient-based Jacobian saliency lack human interpretability. Although Perturbation saliency highlights ``important" regions more effectively, humans still struggle to comprehend why certain highlighted regions, such as the cloud at the top of the frame, are considered important, which is counterintuitive. In contrast, our approach initially predicts that the agent's current action is ``Breakout" and then highlights only the area with the mushroom as crucial for this specific purpose. Another intriguing observation can be made in the examples in the third and fifth columns, where the Saliency maps obtained by our method for the ``Hovering" purpose are notably broader, aligning well with the expected nature of the agent's action for this purpose.

The saliency maps representing alternative potential action purposes in the last row are intended as auxiliary information rather than the primary focus. They allow us to explore how saliency changes under different action purposes. It's important to note that the absence of underlying features for these potential purposes may occasionally result in less clear saliency patterns, but in the majority of cases, these maps offer meaningful insights for human interpretation. These saliency maps not only provide user-friendly insights into the decision-making process and interactions of RL agents with their environment but also aid domain experts in comprehending the reasoning behind the Reasoner Network's classifications. 

\subsection{AI safety: before the failure}
\begin{figure}[h]
\centering
\includegraphics[width=0.3\textwidth]{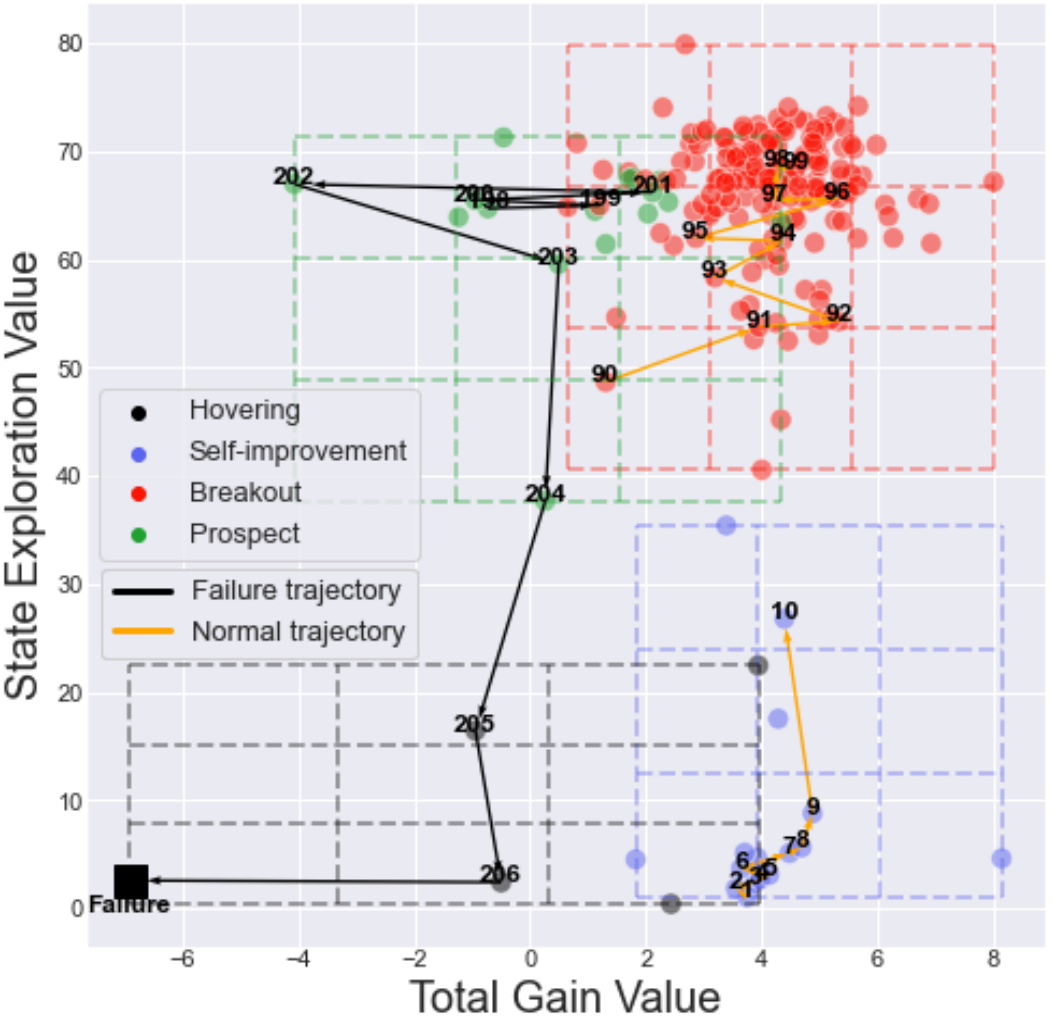}
  \caption{Action classification and early failure detection}
 \label{fig:AI_safety}
\end{figure}
The Reasoner Network plays a crucial role in classifying the actions taken by the agent. Figure~\ref{fig:AI_safety} illustrates the classification results of the agent's actions in a final failed episode, providing valuable insights into the agent's behavior throughout the episode. In RL, agent failure is a common occurrence, especially when the model is not well-trained. Our approach presents a novel method for detecting potentially dangerous behavior in the agent, making contributions to the field of AI safety. Notably, we have discovered that substantial fluctuations in the continuous action classification in the short term indicate the agent's action purpose or behavior is becoming unstable, which may ultimately lead to failure. This finding offers new insight into ensuring AI safety, as it allows for early detection and prevention of dangerous behaviors in RL agents.

\subsection{Identifying the exploration encouragement}
\begin{figure}[h]
\centering
    \includegraphics[width=0.4\textwidth]{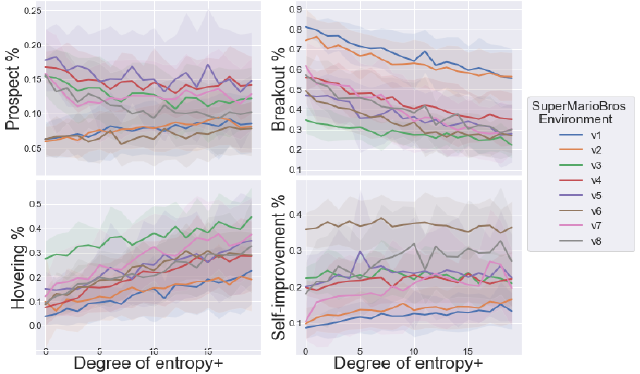}
  \caption{The relationship between policy entropy increment and the ratio of Reasoner-provided labels}
 \label{fig:Exploration_identification}
\end{figure}
Our proposed framework enables the identification of the level of exploration encouragement within RL models. We achieve this by introducing an entropy increment on the well-trained A2C policy and examining its relationship with the ratio of the four labels provided by $R_{\phi}$. To increment the entropy of the policy distribution, we employ a specific method. At each time step, we uniformly add a value of 0.001 to the policy distribution, which consists of 12 discrete values. Subsequently, we normalize the policy distribution to ensure it remains a valid probability distribution. The degree of entropy increment, denoted as entropy+, is determined by the number of such additions. We conducted 100 episodes for each degree of entropy+ per environment. As depicted in Figure \ref{fig:Exploration_identification}, we observe that as the entropy of the policy increases, there is a gradual decrease in the proportion of ``Breakout" labels, accompanied by an increase in the proportion of ``Hovering" labels. On the other hand, ``Self-improvement" and ``Prospect" labels show no clear trend. The observed changes in the proportions of ``Breakout" and ``Hovering" labels align with our expectations, as providing more exploration encouragement to the agent allows it to explore the environment rather than directly selecting an optimal strategy. In practice, by utilizing the ratio of labels provided by $R_{\phi}$, we are able to estimate the level of exploration encouragement in RL algorithms. This identification of the exploration hierarchy empowers us to proactively select the most suitable exploration strategy algorithm for agents across different environments.

\subsection{Convergence of Pseudo-Groundtruth labels}
\begin{figure}[h]
\centering
    \includegraphics[width=0.3\textwidth]{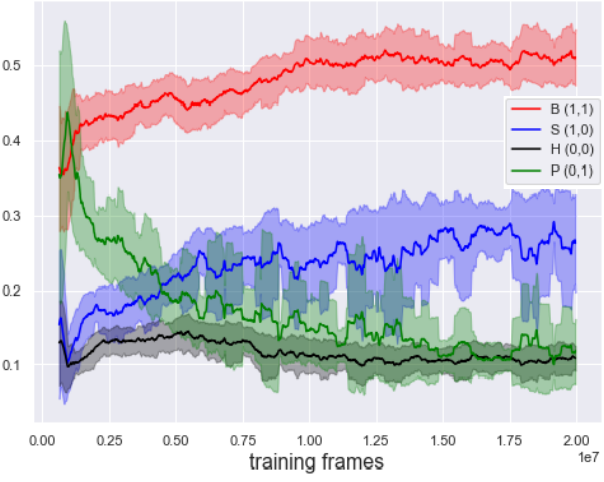}
  \caption{Convergence of four Pseudo-Groundtruth labels}
 \label{fig:convergence}
\end{figure}
We record the proportions of Pseudo-Groundtruth labels throughout $R_{\phi}$ training process to support the conclusions of Corollary~\ref{corollary_prop} for the $d=2$ case with the assumption $G$ and $S_e$ are two independent features. The trends are shown in Figure~\ref{fig:convergence}, where it can be observed that all four label proportions eventually converge to relatively stable values. Specifically, we observe that the proportions of ``Breakout" and ``Self-improvement" rise initially before leveling off, while the proportion of ``Prospect" experiences a decline to reach a stable state. Additionally, the proportion of ``Hovering" exhibits minor fluctuations before ultimately settling into a consistent pattern. The convergence of training label proportions not only highlights the evolving trend of the agent's actions during the training process but also serves as an indicator of the completion of A2C model training.

\section{Conclusion}
Incorporating the Reasoner module into the Actor-Critic framework significantly enhances its interpretability, during both the training process and the inference process. Through theoretical proof and experimental validation, we have demonstrated the effectiveness of our proposed A2CR framework, which offers valuable contributions in the areas of visual interpretation, failure early detection, RL algorithm exploration identification, and RL supervision. By providing a more interpretable RL framework, our goal is to  improve understanding, trust, and practical deployment of RL models in a wide range of scenarios.

\bibliography{arxiv}

\onecolumn
\section{Appendix}
\label{Appendix}
\subsection{Setting and Scores in Training}
Our objective is to offer an interpretation of the agent's behavior in reinforcement learning, allowing and acknowledging situations where the agent may not successfully complete the game. Based on our observations, setting the value of $\rho_2$ to be between 0.1 and 0.3 has been found to facilitate the agent's learning and successful completion. However, we intentionally set $\rho_2$ to 0.5 to promote behavioral diversity and increase the likelihood of the agent failing the game.
The Frame stack is configured with a value of 3 to facilitate its integration with the Grad-CAM technique. It is recommended to set the discount factor $\gamma$ between 0.9 and 0.99 and set the parameter $\rho_1$ between 0.3 and 1. Additionally, $w_1$ represents the relative importance of the state value difference ($\Delta v$) and the reward ($r$) in the feature “total gain” value ($G$). Considering both factors to be equally significant, we set $w_1$ to 0.5.

\begin{table}[h]
\label{parameters}
\centering
  \caption{Parameter settings for the A2CR training}
  \begin{tabular}{cc}
    \toprule
Environment  & SuperMarioBros\\
World  &  \{1, 2, 3, 4, 5, 6, 7, 8\} \\
Version  &  0 (standard ROM)\\
Movement & COMPLEX MOVEMENT \\
Warp frame size & 84 $\times$ 84 \\
Frame stack & 3 \\
$\gamma$ & 0.9 \\
$\rho_1$ & 0.5 \\
$\rho_2$ & 0.5 \\
$w_1$ & 0.5 \\
Total number of training frames for A2C  & 2e+7 \\
Total number of training frames for Reasoner & 2e+7 \\
Batch size & 16 \\
Learning rate for A2C & 2.5e-4\\
Learning rate for Reasoner & 2.5e-4 \\
Number of A2C workers & 16 \\ 
Number of Reasoner workers (collector) & 4 \\ 
Exploring Pool size (N) & 1000 \\
Number of Reasoner classification (M) & 4\\
\bottomrule
\end{tabular}   
\end{table}      

Based on the provided configurations, Table \ref{score_and_status} presents the average scores of the agent over the last 100 episodes during the training phase. According to the tests, the agent is able to successfully pass the SuperMarioBros World  $\{1, 3, 4, 5, 6, 7\}$ However, in the case of World $\{2\}$, it is challenging for the agent to pass it. Furthermore, in World $\{8\}$, it is observed that the agent consistently fails in the final steps leading up to completing the game.
\begin{table}[h]
\centering
  \caption{Average score for the last 100 training episodes}
  \begin{tabular}{ccc}
\toprule
Environment & score & pass/fail status \\
\midrule
SuperMarioBros-1-1-v0 & 2591.29  & pass\\
SuperMarioBros-2-1-v0 & 1355.30  & fail\\
SuperMarioBros-3-1-v0 & 2175.47  & pass\\
SuperMarioBros-4-1-v0 & 2952.64  & pass\\
SuperMarioBros-5-1-v0 & 1849.54  & pass\\
SuperMarioBros-6-1-v0 & 2632.66  & pass\\
SuperMarioBros-7-1-v0 & 1916.08  & pass\\
SuperMarioBros-8-1-v0 & 2283.60  & fail (close to pass)\\
\bottomrule
\label{score_and_status}
\end{tabular}   
\end{table}      

% \newpage
% \section{}

\newpage
\subsection{Exploration Encouragement Level and Proportions of Reasoner-Provided Labels}
\begin{table}[!htbp]
\centering
\caption{The relationship between policy entropy increment (entropy+) and the proportion of "Breakout" labels (mean $\mu$, standard deviation $\sigma$) based on 100 test episodes for each entropy+ per environment.
}
\begin{tabular}{ccccccccccccccccc}
\toprule
 &  
\multicolumn{2}{c}{Env 1} & \multicolumn{2}{c}{Env 2} & 
\multicolumn{2}{c}{Env 3} & \multicolumn{2}{c}{Env 4} & 
\multicolumn{2}{c}{Env 5} & \multicolumn{2}{c}{Env 6} & 
\multicolumn{2}{c}{Env 7} & \multicolumn{2}{c}{Env 8} \\
% \midrule{2-17}
\cmidrule(lr){2-17}
entropy+  & $\mu$ & $\sigma$ & $\mu$ & $\sigma$ & $\mu$ & $\sigma$ 
& $\mu$ & $\sigma$ & $\mu$ & $\sigma$ & $\mu$ & $\sigma$ 
& $\mu$ & $\sigma$ & $\mu$ & $\sigma$ \\
\midrule
        0 &        0.81 &       0.06 &        0.74 &       0.16 &        0.35 &       0.07 &        0.56 &       0.06 &        0.47 &       0.10 &        0.49 &       0.06 &        0.62 &       0.12 &        0.57 &       0.10 \\
        1 &        0.80 &       0.05 &        0.76 &       0.09 &        0.33 &       0.07 &        0.56 &       0.03 &        0.46 &       0.10 &        0.44 &       0.11 &        0.53 &       0.18 &        0.53 &       0.11 \\
        2 &        0.77 &       0.09 &        0.70 &       0.13 &        0.32 &       0.08 &        0.54 &       0.07 &        0.47 &       0.11 &        0.43 &       0.13 &        0.53 &       0.19 &        0.52 &       0.13 \\
        3 &        0.77 &       0.07 &        0.72 &       0.12 &        0.31 &       0.07 &        0.53 &       0.11 &        0.45 &       0.12 &        0.41 &       0.10 &        0.52 &       0.20 &        0.46 &       0.15 \\
        4 &        0.73 &       0.10 &        0.69 &       0.10 &        0.31 &       0.06 &        0.48 &       0.12 &        0.44 &       0.12 &        0.40 &       0.11 &        0.52 &       0.18 &        0.46 &       0.12 \\
        5 &        0.71 &       0.10 &        0.67 &       0.12 &        0.31 &       0.06 &        0.48 &       0.12 &        0.36 &       0.17 &        0.38 &       0.11 &        0.46 &       0.17 &        0.46 &       0.15 \\
        6 &        0.70 &       0.09 &        0.68 &       0.09 &        0.29 &       0.07 &        0.48 &       0.13 &        0.36 &       0.15 &        0.36 &       0.11 &        0.46 &       0.16 &        0.45 &       0.15 \\
        7 &        0.71 &       0.10 &        0.65 &       0.11 &        0.27 &       0.07 &        0.45 &       0.12 &        0.38 &       0.15 &        0.37 &       0.11 &        0.47 &       0.19 &        0.41 &       0.15 \\
        8 &        0.68 &       0.11 &        0.62 &       0.12 &        0.30 &       0.05 &        0.46 &       0.11 &        0.40 &       0.14 &        0.33 &       0.09 &        0.41 &       0.18 &        0.41 &       0.16 \\
        9 &        0.66 &       0.12 &        0.63 &       0.11 &        0.28 &       0.06 &        0.42 &       0.13 &        0.35 &       0.16 &        0.32 &       0.10 &        0.39 &       0.19 &        0.40 &       0.14 \\
       10 &        0.64 &       0.14 &        0.63 &       0.12 &        0.27 &       0.07 &        0.40 &       0.14 &        0.36 &       0.14 &        0.34 &       0.09 &        0.34 &       0.15 &        0.38 &       0.15 \\
       11 &        0.69 &       0.11 &        0.62 &       0.12 &        0.27 &       0.07 &        0.42 &       0.14 &        0.33 &       0.15 &        0.28 &       0.12 &        0.36 &       0.17 &        0.42 &       0.15 \\
       12 &        0.62 &       0.13 &        0.60 &       0.10 &        0.26 &       0.06 &        0.41 &       0.14 &        0.35 &       0.15 &        0.29 &       0.09 &        0.36 &       0.14 &        0.32 &       0.12 \\
       13 &        0.64 &       0.13 &        0.61 &       0.11 &        0.28 &       0.06 &        0.39 &       0.15 &        0.31 &       0.15 &        0.26 &       0.10 &        0.32 &       0.15 &        0.35 &       0.16 \\
       14 &        0.62 &       0.14 &        0.59 &       0.13 &        0.26 &       0.08 &        0.37 &       0.16 &        0.33 &       0.16 &        0.29 &       0.09 &        0.30 &       0.14 &        0.38 &       0.11 \\
       15 &        0.60 &       0.12 &        0.58 &       0.10 &        0.27 &       0.05 &        0.36 &       0.13 &        0.34 &       0.13 &        0.27 &       0.11 &        0.31 &       0.14 &        0.30 &       0.14 \\
       16 &        0.61 &       0.15 &        0.57 &       0.14 &        0.25 &       0.07 &        0.35 &       0.12 &        0.32 &       0.14 &        0.28 &       0.10 &        0.28 &       0.13 &        0.32 &       0.15 \\
       17 &        0.59 &       0.14 &        0.58 &       0.10 &        0.25 &       0.06 &        0.37 &       0.12 &        0.31 &       0.13 &        0.25 &       0.09 &        0.28 &       0.13 &        0.32 &       0.15 \\
       18 &        0.57 &       0.15 &        0.56 &       0.12 &        0.26 &       0.07 &        0.36 &       0.14 &        0.27 &       0.16 &        0.28 &       0.09 &        0.28 &       0.15 &        0.28 &       0.13 \\
       19 &        0.56 &       0.15 &        0.56 &       0.12 &        0.22 &       0.07 &        0.35 &       0.13 &        0.28 &       0.14 &        0.27 &       0.09 &        0.30 &       0.12 &        0.30 &       0.13 \\
\bottomrule
\end{tabular}
\end{table}

\begin{table}[!htbp]
\centering
\caption{The relationship between policy entropy increment (entropy+) and the proportion of "Self-improvement" labels (mean $\mu$, standard deviation $\sigma$) based on 100 test episodes for each entropy+ per environment.
}
\begin{tabular}{ccccccccccccccccc}
\toprule
 &  
\multicolumn{2}{c}{Env 1} & \multicolumn{2}{c}{Env 2} & 
\multicolumn{2}{c}{Env 3} & \multicolumn{2}{c}{Env 4} & 
\multicolumn{2}{c}{Env 5} & \multicolumn{2}{c}{Env 6} & 
\multicolumn{2}{c}{Env 7} & \multicolumn{2}{c}{Env 8} \\
\cmidrule(lr){2-17}
entropy+  & $\mu$ & $\sigma$ & $\mu$ & $\sigma$ & $\mu$ & $\sigma$ 
& $\mu$ & $\sigma$ & $\mu$ & $\sigma$ & $\mu$ & $\sigma$ 
& $\mu$ & $\sigma$ & $\mu$ & $\sigma$ \\
\midrule
        0 &        0.09 &       0.03 &        0.10 &       0.04 &        0.22 &       0.03 &        0.20 &       0.02 &        0.20 &       0.09 &        0.36 &       0.05 &        0.10 &       0.05 &        0.18 &       0.09 \\
        1 &        0.09 &       0.03 &        0.12 &       0.04 &        0.23 &       0.05 &        0.19 &       0.02 &        0.21 &       0.08 &        0.36 &       0.06 &        0.16 &       0.12 &        0.21 &       0.12 \\
        2 &        0.10 &       0.04 &        0.12 &       0.05 &        0.24 &       0.06 &        0.20 &       0.06 &        0.22 &       0.09 &        0.38 &       0.07 &        0.17 &       0.13 &        0.21 &       0.13 \\
        3 &        0.10 &       0.04 &        0.12 &       0.05 &        0.23 &       0.07 &        0.21 &       0.07 &        0.23 &       0.13 &        0.37 &       0.05 &        0.17 &       0.10 &        0.26 &       0.12 \\
        4 &        0.11 &       0.04 &        0.13 &       0.05 &        0.23 &       0.04 &        0.22 &       0.06 &        0.22 &       0.10 &        0.38 &       0.07 &        0.18 &       0.11 &        0.24 &       0.12 \\
        5 &        0.12 &       0.04 &        0.14 &       0.05 &        0.23 &       0.06 &        0.22 &       0.05 &        0.30 &       0.17 &        0.37 &       0.07 &        0.18 &       0.10 &        0.22 &       0.11 \\
        6 &        0.11 &       0.04 &        0.14 &       0.05 &        0.22 &       0.07 &        0.22 &       0.08 &        0.25 &       0.13 &        0.37 &       0.07 &        0.18 &       0.10 &        0.25 &       0.12 \\
        7 &        0.13 &       0.05 &        0.13 &       0.05 &        0.25 &       0.08 &        0.23 &       0.07 &        0.26 &       0.13 &        0.39 &       0.07 &        0.17 &       0.11 &        0.28 &       0.15 \\
        8 &        0.12 &       0.04 &        0.14 &       0.06 &        0.24 &       0.07 &        0.20 &       0.04 &        0.24 &       0.12 &        0.37 &       0.07 &        0.20 &       0.13 &        0.28 &       0.12 \\
        9 &        0.12 &       0.04 &        0.15 &       0.06 &        0.23 &       0.06 &        0.23 &       0.07 &        0.24 &       0.12 &        0.37 &       0.07 &        0.19 &       0.12 &        0.29 &       0.13 \\
       10 &        0.13 &       0.06 &        0.14 &       0.06 &        0.22 &       0.07 &        0.23 &       0.08 &        0.24 &       0.16 &        0.38 &       0.06 &        0.21 &       0.10 &        0.32 &       0.10 \\
       11 &        0.12 &       0.05 &        0.14 &       0.06 &        0.24 &       0.07 &        0.22 &       0.07 &        0.24 &       0.10 &        0.38 &       0.09 &        0.22 &       0.12 &        0.25 &       0.10 \\
       12 &        0.13 &       0.05 &        0.14 &       0.06 &        0.23 &       0.08 &        0.23 &       0.07 &        0.23 &       0.12 &        0.37 &       0.08 &        0.24 &       0.15 &        0.31 &       0.12 \\
       13 &        0.12 &       0.04 &        0.14 &       0.05 &        0.23 &       0.06 &        0.22 &       0.06 &        0.25 &       0.13 &        0.37 &       0.08 &        0.23 &       0.14 &        0.28 &       0.12 \\
       14 &        0.13 &       0.06 &        0.14 &       0.06 &        0.22 &       0.08 &        0.21 &       0.06 &        0.24 &       0.15 &        0.36 &       0.08 &        0.22 &       0.12 &        0.28 &       0.11 \\
       15 &        0.13 &       0.05 &        0.14 &       0.06 &        0.24 &       0.08 &        0.24 &       0.08 &        0.23 &       0.12 &        0.35 &       0.08 &        0.21 &       0.10 &        0.31 &       0.13 \\
       16 &        0.14 &       0.05 &        0.14 &       0.05 &        0.21 &       0.07 &        0.22 &       0.06 &        0.24 &       0.14 &        0.37 &       0.06 &        0.20 &       0.10 &        0.29 &       0.12 \\
       17 &        0.13 &       0.04 &        0.16 &       0.06 &        0.23 &       0.07 &        0.21 &       0.06 &        0.23 &       0.11 &        0.37 &       0.08 &        0.27 &       0.15 &        0.29 &       0.13 \\
       18 &        0.15 &       0.06 &        0.15 &       0.06 &        0.22 &       0.07 &        0.22 &       0.07 &        0.26 &       0.15 &        0.35 &       0.07 &        0.26 &       0.12 &        0.33 &       0.15 \\
       19 &        0.13 &       0.05 &        0.17 &       0.06 &        0.21 &       0.07 &        0.22 &       0.07 &        0.23 &       0.12 &        0.36 &       0.07 &        0.20 &       0.11 &        0.27 &       0.13 \\
\bottomrule
\end{tabular}
\end{table}

\begin{table}[!htbp]
\centering
\caption{The relationship between policy entropy increment (entropy+) and the proportion of "Hovering" labels (mean $\mu$, standard deviation $\sigma$) based on 100 test episodes for each entropy+ per environment.
}
\begin{tabular}{ccccccccccccccccc}
\toprule
&  
\multicolumn{2}{c}{Env 1} & \multicolumn{2}{c}{Env 2} & 
\multicolumn{2}{c}{Env 3} & \multicolumn{2}{c}{Env 4} & 
\multicolumn{2}{c}{Env 5} & \multicolumn{2}{c}{Env 6} & 
\multicolumn{2}{c}{Env 7} & \multicolumn{2}{c}{Env 8} \\
\cmidrule(lr){2-17}
entropy+  & $\mu$ & $\sigma$ & $\mu$ & $\sigma$ & $\mu$ & $\sigma$ 
& $\mu$ & $\sigma$ & $\mu$ & $\sigma$ & $\mu$ & $\sigma$ 
& $\mu$ & $\sigma$ & $\mu$ & $\sigma$ \\
\midrule
        0 &        0.04 &       0.04 &        0.10 &       0.18 &        0.28 &       0.07 &        0.07 &       0.07 &        0.15 &       0.07 &        0.09 &       0.06 &        0.12 &       0.08 &        0.09 &       0.05 \\
        1 &        0.05 &       0.04 &        0.06 &       0.07 &        0.29 &       0.08 &        0.09 &       0.05 &        0.15 &       0.05 &        0.13 &       0.09 &        0.17 &       0.13 &        0.12 &       0.07 \\
        2 &        0.07 &       0.07 &        0.11 &       0.13 &        0.29 &       0.10 &        0.10 &       0.06 &        0.15 &       0.08 &        0.12 &       0.08 &        0.18 &       0.14 &        0.13 &       0.08 \\
        3 &        0.06 &       0.05 &        0.10 &       0.12 &        0.33 &       0.10 &        0.11 &       0.08 &        0.15 &       0.06 &        0.16 &       0.09 &        0.20 &       0.14 &        0.17 &       0.10 \\
        4 &        0.09 &       0.08 &        0.10 &       0.09 &        0.33 &       0.08 &        0.16 &       0.10 &        0.17 &       0.08 &        0.16 &       0.09 &        0.19 &       0.13 &        0.17 &       0.09 \\
        5 &        0.10 &       0.07 &        0.12 &       0.10 &        0.32 &       0.08 &        0.15 &       0.10 &        0.20 &       0.12 &        0.19 &       0.11 &        0.25 &       0.15 &        0.20 &       0.11 \\
        6 &        0.10 &       0.06 &        0.11 &       0.09 &        0.36 &       0.13 &        0.16 &       0.10 &        0.24 &       0.18 &        0.19 &       0.09 &        0.23 &       0.14 &        0.19 &       0.11 \\
        7 &        0.09 &       0.07 &        0.14 &       0.10 &        0.37 &       0.10 &        0.18 &       0.11 &        0.22 &       0.15 &        0.19 &       0.09 &        0.23 &       0.14 &        0.20 &       0.10 \\
        8 &        0.12 &       0.10 &        0.16 &       0.13 &        0.33 &       0.09 &        0.19 &       0.12 &        0.19 &       0.10 &        0.24 &       0.10 &        0.27 &       0.15 &        0.21 &       0.13 \\
        9 &        0.14 &       0.10 &        0.14 &       0.10 &        0.35 &       0.11 &        0.22 &       0.13 &        0.26 &       0.18 &        0.25 &       0.10 &        0.30 &       0.14 &        0.20 &       0.11 \\
       10 &        0.15 &       0.14 &        0.15 &       0.12 &        0.38 &       0.12 &        0.22 &       0.13 &        0.25 &       0.16 &        0.23 &       0.09 &        0.32 &       0.14 &        0.20 &       0.13 \\
       11 &        0.11 &       0.07 &        0.15 &       0.12 &        0.36 &       0.13 &        0.22 &       0.14 &        0.29 &       0.18 &        0.27 &       0.11 &        0.29 &       0.16 &        0.22 &       0.12 \\
       12 &        0.17 &       0.13 &        0.17 &       0.10 &        0.41 &       0.10 &        0.23 &       0.13 &        0.27 &       0.18 &        0.27 &       0.10 &        0.28 &       0.14 &        0.26 &       0.14 \\
       13 &        0.16 &       0.11 &        0.17 &       0.11 &        0.36 &       0.10 &        0.25 &       0.15 &        0.28 &       0.17 &        0.31 &       0.12 &        0.32 &       0.18 &        0.26 &       0.13 \\
       14 &        0.16 &       0.11 &        0.18 &       0.13 &        0.40 &       0.12 &        0.28 &       0.18 &        0.29 &       0.18 &        0.28 &       0.09 &        0.36 &       0.17 &        0.24 &       0.12 \\
       15 &        0.19 &       0.12 &        0.18 &       0.11 &        0.38 &       0.09 &        0.26 &       0.13 &        0.26 &       0.12 &        0.31 &       0.12 &        0.35 &       0.16 &        0.29 &       0.15 \\
       16 &        0.18 &       0.15 &        0.20 &       0.14 &        0.43 &       0.12 &        0.29 &       0.13 &        0.30 &       0.20 &        0.28 &       0.10 &        0.38 &       0.14 &        0.29 &       0.16 \\
       17 &        0.19 &       0.13 &        0.17 &       0.10 &        0.41 &       0.11 &        0.26 &       0.14 &        0.31 &       0.19 &        0.30 &       0.09 &        0.33 &       0.15 &        0.29 &       0.18 \\
       18 &        0.20 &       0.15 &        0.20 &       0.13 &        0.40 &       0.11 &        0.29 &       0.16 &        0.34 &       0.21 &        0.29 &       0.10 &        0.34 &       0.16 &        0.29 &       0.12 \\
       19 &        0.23 &       0.14 &        0.19 &       0.13 &        0.45 &       0.12 &        0.28 &       0.14 &        0.35 &       0.18 &        0.29 &       0.10 &        0.37 &       0.15 &        0.33 &       0.19 \\
\bottomrule
\end{tabular}
\end{table}

\begin{table}[!htbp]
\centering
\caption{The relationship between policy entropy increment (entropy+) and the proportion of "Prospect" labels (mean $\mu$, standard deviation $\sigma$) based on 100 test episodes for each entropy+ per environment.
}
\begin{tabular}{ccccccccccccccccc}
\toprule
&  
\multicolumn{2}{c}{Env 1} & \multicolumn{2}{c}{Env 2} & 
\multicolumn{2}{c}{Env 3} & \multicolumn{2}{c}{Env 4} & 
\multicolumn{2}{c}{Env 5} & \multicolumn{2}{c}{Env 6} & 
\multicolumn{2}{c}{Env 7} & \multicolumn{2}{c}{Env 8} \\
\cmidrule(lr){2-17}
entropy+  & $\mu$ & $\sigma$ & $\mu$ & $\sigma$ & $\mu$ & $\sigma$ 
& $\mu$ & $\sigma$ & $\mu$ & $\sigma$ & $\mu$ & $\sigma$ 
& $\mu$ & $\sigma$ & $\mu$ & $\sigma$ \\
\midrule
        0 &        0.06 &       0.03 &        0.06 &       0.02 &        0.15 &       0.03 &        0.17 &       0.02 &        0.18 &       0.04 &        0.06 &       0.02 &        0.16 &       0.07 &        0.16 &       0.05 \\
        1 &        0.07 &       0.03 &        0.06 &       0.02 &        0.15 &       0.04 &        0.17 &       0.03 &        0.18 &       0.05 &        0.07 &       0.03 &        0.14 &       0.07 &        0.13 &       0.06 \\
        2 &        0.07 &       0.03 &        0.07 &       0.03 &        0.14 &       0.04 &        0.16 &       0.04 &        0.16 &       0.05 &        0.07 &       0.02 &        0.12 &       0.06 &        0.14 &       0.05 \\
        3 &        0.07 &       0.03 &        0.06 &       0.03 &        0.13 &       0.05 &        0.15 &       0.04 &        0.17 &       0.06 &        0.07 &       0.04 &        0.11 &       0.05 &        0.12 &       0.05 \\
        4 &        0.07 &       0.04 &        0.07 &       0.03 &        0.14 &       0.04 &        0.15 &       0.04 &        0.17 &       0.05 &        0.06 &       0.03 &        0.12 &       0.05 &        0.13 &       0.05 \\
        5 &        0.07 &       0.04 &        0.08 &       0.03 &        0.14 &       0.05 &        0.15 &       0.03 &        0.15 &       0.07 &        0.06 &       0.04 &        0.12 &       0.06 &        0.12 &       0.05 \\
        6 &        0.08 &       0.05 &        0.07 &       0.03 &        0.12 &       0.05 &        0.14 &       0.03 &        0.15 &       0.08 &        0.08 &       0.03 &        0.12 &       0.06 &        0.11 &       0.05 \\
        7 &        0.08 &       0.04 &        0.08 &       0.03 &        0.12 &       0.05 &        0.15 &       0.04 &        0.15 &       0.06 &        0.06 &       0.03 &        0.12 &       0.06 &        0.11 &       0.05 \\
        8 &        0.07 &       0.04 &        0.08 &       0.03 &        0.13 &       0.04 &        0.15 &       0.04 &        0.17 &       0.05 &        0.06 &       0.03 &        0.13 &       0.05 &        0.10 &       0.05 \\
        9 &        0.08 &       0.04 &        0.08 &       0.03 &        0.13 &       0.04 &        0.14 &       0.04 &        0.15 &       0.07 &        0.07 &       0.04 &        0.13 &       0.05 &        0.10 &       0.05 \\
       10 &        0.08 &       0.04 &        0.08 &       0.03 &        0.13 &       0.04 &        0.14 &       0.05 &        0.15 &       0.07 &        0.06 &       0.04 &        0.13 &       0.05 &        0.09 &       0.04 \\
       11 &        0.07 &       0.04 &        0.08 &       0.04 &        0.12 &       0.04 &        0.14 &       0.04 &        0.13 &       0.07 &        0.08 &       0.04 &        0.12 &       0.05 &        0.11 &       0.04 \\
       12 &        0.08 &       0.05 &        0.09 &       0.04 &        0.11 &       0.04 &        0.13 &       0.04 &        0.15 &       0.06 &        0.07 &       0.04 &        0.12 &       0.05 &        0.10 &       0.05 \\
       13 &        0.08 &       0.04 &        0.08 &       0.04 &        0.12 &       0.05 &        0.14 &       0.04 &        0.16 &       0.09 &        0.06 &       0.04 &        0.13 &       0.05 &        0.10 &       0.05 \\
       14 &        0.09 &       0.04 &        0.08 &       0.04 &        0.12 &       0.05 &        0.14 &       0.04 &        0.14 &       0.08 &        0.07 &       0.04 &        0.13 &       0.04 &        0.10 &       0.05 \\
       15 &        0.09 &       0.04 &        0.09 &       0.04 &        0.11 &       0.05 &        0.14 &       0.04 &        0.17 &       0.08 &        0.07 &       0.04 &        0.13 &       0.06 &        0.10 &       0.05 \\
       16 &        0.08 &       0.05 &        0.10 &       0.04 &        0.12 &       0.05 &        0.14 &       0.04 &        0.14 &       0.08 &        0.08 &       0.04 &        0.14 &       0.06 &        0.09 &       0.06 \\
       17 &        0.09 &       0.04 &        0.09 &       0.04 &        0.12 &       0.05 &        0.15 &       0.04 &        0.15 &       0.07 &        0.08 &       0.04 &        0.12 &       0.06 &        0.10 &       0.05 \\
       18 &        0.08 &       0.04 &        0.08 &       0.04 &        0.12 &       0.04 &        0.14 &       0.05 &        0.13 &       0.08 &        0.08 &       0.04 &        0.12 &       0.05 &        0.10 &       0.05 \\
       19 &        0.09 &       0.04 &        0.08 &       0.04 &        0.12 &       0.04 &        0.14 &       0.04 &        0.15 &       0.07 &        0.08 &       0.04 &        0.13 &       0.05 &        0.10 &       0.06 \\
\bottomrule
\end{tabular}
\end{table}

\newpage
\subsection{Label Convergence and Completion Identification in A2CR Training}
\begin{figure}[h]
  \includegraphics[width=\textwidth]{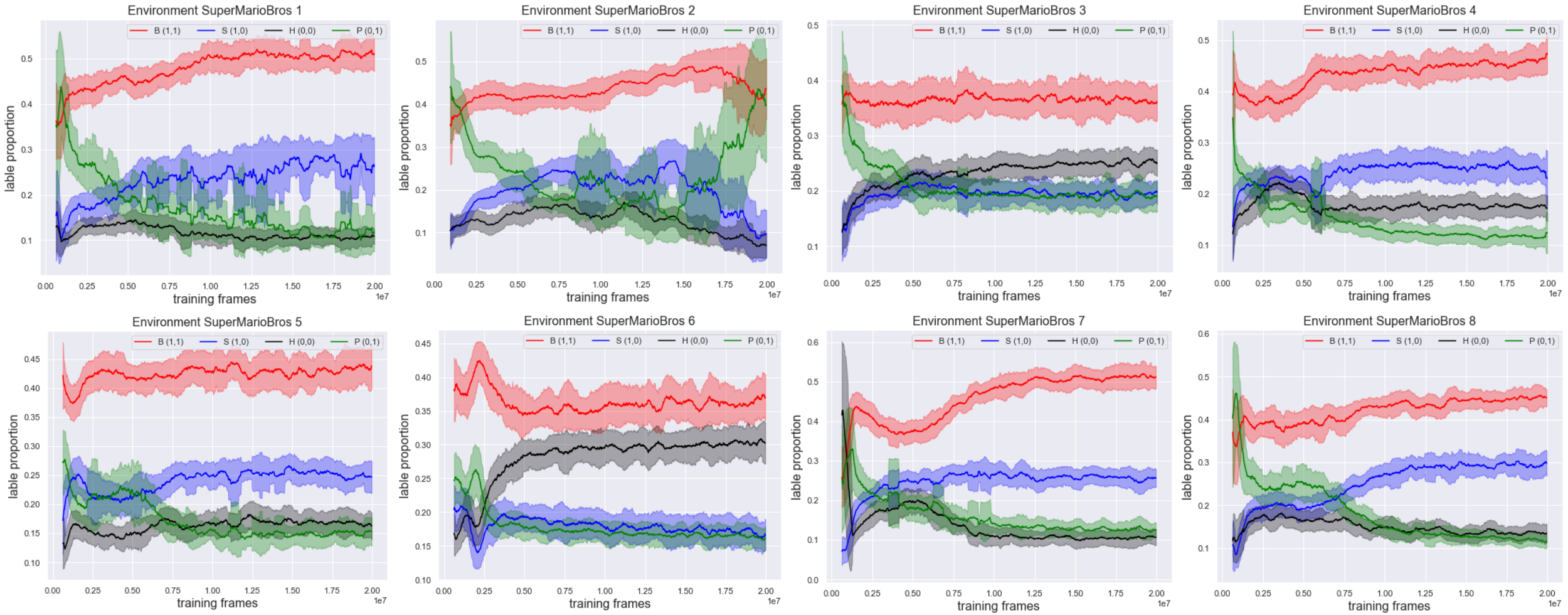}
  \caption{Convergence of four Pseudo-Groundtruth labels in A2CR Training}
  \label{fig:labels_convergence}
\end{figure}
Figure \ref{fig:labels_convergence} illustrates the convergence of the Pseudo-Groundtruth label proportion during A2CR training. In World $\{2\}$, where the agent is not trained to successfully pass the game, the label proportion shows continuous fluctuations without convergence. This indicates that the training is incomplete and requires additional training time. In World $\{8\}$, the agent consistently fails in the last few steps, suggesting that the algorithm is trapped in a local optimal solution. Instead of extending the training time, adjusting training parameters such as $\rho_1$, $\rho_2$, and $\gamma$ becomes necessary at this stage. In other words, the training process is completed but unsuccessful. In World $\{1, 3, 4, 5, 6, 7\}$, all agents are trained to successfully pass the game. The training process is deemed completed in these worlds, supported by the convergence of the label proportions.

\end{document}